%
\documentclass[runningheads]{llncs}
\usepackage{graphicx}
%
\usepackage{times}
\usepackage{marvosym}

\usepackage{graphicx} 
\usepackage{bm}
\usepackage[numbers]{natbib}
\usepackage{xcolor}
\usepackage{amsfonts}
\usepackage{multirow}
\usepackage[ruled]{algorithm2e}
\usepackage{amsmath}
\usepackage{hyperref}
\usepackage{subcaption}

\begin{document}
\title{Generative Model for Heterogeneous Inference}
%
%
\author{Honggang Zhou \inst{1} \and
Yunchun Li \inst{1}\and
Hailong Yang \inst{1}\textsuperscript{(\Letter)} \and
Wei Li \inst{1}\and
Jie Jia \inst{2}
}
\authorrunning{H. Zhou et al.}
%
\institute{School of Computer Science and Engineering, Beihang University, Beijing, China \email{\{zhg,lych,hailong.yang,liw\}@buaa.edu.cn}\\
\and Taiyuan University of Technology, Taiyuan, China \email{jiajie1921@link.tyut.edu.cn}
}
\maketitle              
\begin{abstract}
Generative models (GMs) such as Generative Adversary Network (GAN) and Variational Auto-Encoder (VAE) have thrived these years and achieved high quality results in generating new samples. Especially in Computer Vision, GMs have been used in image inpainting, denoising and completion, which can be treated as the inference from observed pixels to corrupted pixels. However, images are hierarchically structured which are quite different from many real-world inference scenarios with non-hierarchical features. These inference scenarios contain heterogeneous stochastic variables and irregular mutual dependences. Traditionally they are modeled by Bayesian Network (BN). However, the learning and inference of BN model are NP-hard thus the number of stochastic variables in BN is highly constrained. In this paper, we adapt typical GMs to enable heterogeneous learning and inference in polynomial time. We also propose an extended autoregressive (EAR) model and an EAR with adversary loss (EARA) model and give theoretical results on their effectiveness. Experiments on several BN datasets show that our proposed EAR model achieves the best performance in most cases compared to other GMs. Except for black box analysis, we've also done a serial of experiments on Markov border inference of GMs for white box analysis and give theoretical results.
\end{abstract}
\section{Introduction}
\label{section_introduction}
Learning from heterogeneous distribution and performing effective inference are essential for many applications. For example finding interactions between genes \cite{Friedman2000}, fault diagnosing \cite{Cai2014}, risk analysis \cite{Trucco2008}, etc. Traditional approach to cope with these problems is BN, which factors joint probability distribution into conditional probability distribution as follows:
 \begin{equation}
 p(\bm{v})=\prod_{i=1}^Dp(v_i|\bm{v}_{parents(i)}))
 \end{equation}
These stochastic variables $v_i$ are arranged into a directed acyclic graph and $v_{parents(i)}$ is the parent of $v_i$. BN can be constructed using expert knowledge or learning from data sampled from joint probability distribution by structure learning and parameter estimation. Unfortunately, the structure learning \cite{Chickering1996} and inference (both approximate and exact inference) \cite{Cooper1990, Dagum1993} are all NP-Hard problems. However, the function of BN can be achieved without explicitly constructing a DAG. Any GM that approximates posterior probability distribution conditioned on any observations can be treated as a BN in black box.

Recent development of GMs such as autoregressive distribution estimator \cite{larochelle2011}, Variational Auto-Encoders (VAE) \cite{kingma2013}, Generative Adversarial Network (GAN) \cite{goodfellow2014} and pixel recurrent neural networks \cite{oord2016} leverages neural networks to achieve higher speed and accuracy than traditional graphical models which require heavy sampling. Although these models have been successfully used in image inpainting \cite{Yeh2016,Pathak2016,Ishikawa2017}, and denoising \cite{Zhang2016} which can be seen as the inference of pixels, little work has been done to adapt these models for heterogeneous inference. The prevalent method for pixel inference usually compresses the corrupted image into latent representation using Convolutional Neural Network (CNN) and tries to restore the original image using Deconvolutional Neural Network. Images are highly hierarchical and repetitive. Low-level pixels form textures and lines. Textures and lines then form shapes and objects. In this way, the whole image can be compressed into low entropy latent representation. Due to such hierarchy and repetition, high-level latent representation is not twisted too much by the corrupted part. Therefore, the corrupted part can then be inferred from latent representation. Unfortunately, heterogeneous inference does not possess the hierarchical features. Different stochastic variables are heterogeneous and those variables do not have high-level low entropy representation. The network structure used by image inference is not appropriate for heterogeneous inference. We need new models for heterogeneous inference with non-hierarchical features.

GMs have been prevalently used to model a random vector $\bm{x}$ and a latent vector $\bm{z}$. Inference can be made between $\bm{x}$ and $\bm{z}$. The task of heterogeneous inference can be seen as inference of latent vector $\bm{z}$ given observation vector $\bm{o}$. However, $\bm{z}$ and $\bm{o}$ are not fix-sized vector and thus traditional GM models cannot be directly used for heterogeneous inference tasks. We make adaption using the following method. We use a fix-sized one-hot vector to encode both $\bm{o}$ and $\bm{z}$. The value of unobserved variables in $\bm{o}$ is simply set to zero. $\bm{z}$ contains both observed variables and unobserved variables but we only use the value of unobserved variables. Note that we only consider categorical distribution. Then we replace vectors of original GMs with $\bm{o}$ and $\bm{z}$. The detail is described in Section \ref{section_label}.


\begin{figure}[ht]
\vskip 0.2in
\begin{center}
\centerline{\includegraphics[width=0.5\columnwidth]{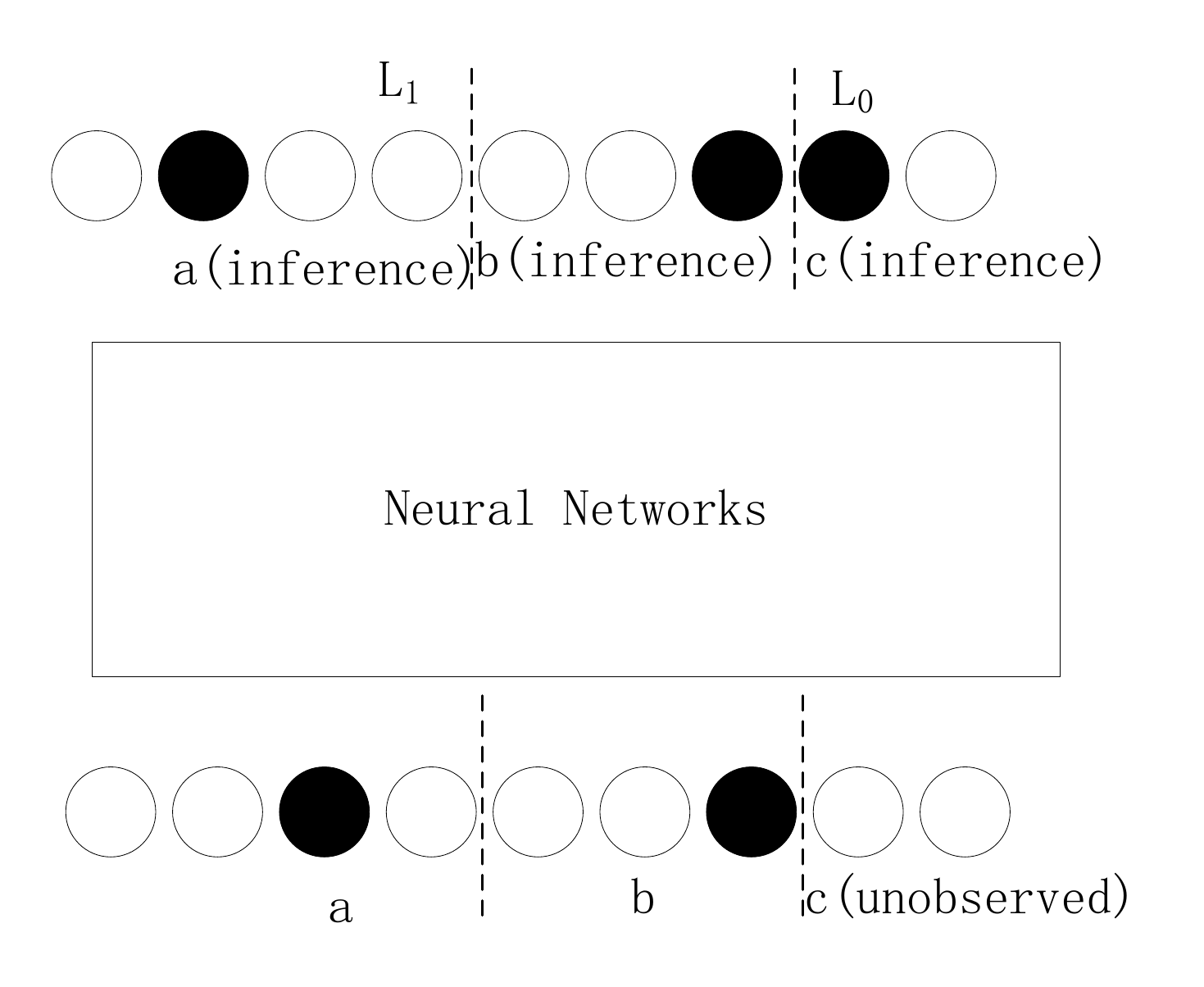}}
\caption{Overview of our proposed EAR model.}
\label{figure_overview}
\end{center}
\vskip -0.2in
\end{figure}

Except for adapting existing models, we also propose an EAR model and an EARA model that adds an adversary loss to EAR, as shown in Figure \ref{figure_overview}. EAR directly regresses latent vector from observation vector through a neural network. Our proposed method is based on the AR model and drops the order constraint of AR because the optimal order of variables is hard to determine. As proved in Section {section.ear.proof}, we can build a redundant network directly approximating the distribution of every stochastic variable conditioned on other stochastic variables. The structure is similar to autoencoders. However, they are different in several aspects. Firstly, the purpose of traditional autoencoders is to obtain low dimension representation, whereas the purpose of EAR is to obtain the output vector. Secondly, autoencoders usually have only one hidden layer, whereas the EAR model has no constraints on the width or depth of hidden layers. Lastly, EAR does not need hidden representation and thus all kinds of structures can be used.

\section{The Model}
\label{section_label}
\subsection{Background and Notations}
\label{section_notation}
Any machine learning problems can be seen as an inference problem. Discriminative models infer the conditional probability distribution of label $y$ given input $x$. Generative models usually model the manifold of data space and inference samples near the modeled manifold. However, the inference of BN is different. BN infers unobserved variables given any observed variables (or even no variables at all). The observed variables can be chosen arbitrarily from all variables. The information we can get from BN is very rich. We can answer any question and predict the behavior of any variables under any observations. However, such property is a double-edged sword. The learning and inference of BN are all NP-hard problems. Many real-world examples have tens of thousands of variables where BN cannot be applied. We thus present a neural network based solution for such kind of inference which breaks the NP-hard constraints by compromising on the accuracy.

To make the statement clear, we assume that we are given N training examples {$X_1$,...,$X_N$}. Each training example is composed of M stochastic variables {$X^1$,...,$X^M$}. Except for $\bm{o}$ and $\bm{z}$ in Section \ref{section_introduction}, we also use another set of notation for convenience. We use $X$ for a sample arranged in onehot format and $\hat{X}$ for the prediction of models. Although the stochastic variables in this paper is discrete, we use linear activation function rather than softmax in output layer as \cite{redmon2016} do. The reason is that squared error loss with linear activation in output layer is faster in our problem.
\subsection{Extended Autoregressive Model}
\label{section.ear.proof}
As shown in Figure \ref{figure_overview}, EAR model can be seen as a cluster of classifiers with parameter share except for the weights of the last layer. Note that we take unobserved state as an additional state for each variable rather than modeling a additional variable for observation state like \cite{Yichuan2012}. Each classifier is a discriminative model. When combined together, they equal a generative model. We have the following theorem
\begin{theorem}
A cluster of discriminative classifier $\bigcup_i p_{\theta_i}(X_i|\bm{o})$ for all variables in $X$ equals a generative model $p(X)$.
\end{theorem}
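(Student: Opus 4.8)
The plan is to establish the claimed equivalence in two directions, leaning on the chain rule of probability together with the fact (emphasized in the EAR construction) that the observation vector $\bm{o}$ is a fixed-size one-hot encoding able to represent an arbitrary subset of already-revealed variables. The easy direction is that a generative model $p(X)$ induces such a cluster: for any choice of observed variables encoded in $\bm{o}$, the conditional $p(X_i\mid\bm{o})$ is obtained from $p(X)$ by marginalizing out the unobserved variables and normalizing, so $\bigcup_i p_{\theta_i}(X_i\mid\bm{o})$ is entirely determined by $p(X)$. The substantive direction is the converse, recovering $p(X)$ from the cluster of conditionals, which is precisely the autoregressive reconstruction once the fixed-order constraint of the AR model is removed.

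For the converse I would first fix an arbitrary permutation $\pi$ of the $M$ variables and invoke the chain rule,
\begin{equation}
p(X)=\prod_{i=1}^{M}p\bigl(X_{\pi(i)}\mid X_{\pi(1)},\dots,X_{\pi(i-1)}\bigr).
\end{equation}
The key observation is that each factor is a member of the cluster under a suitable choice of $\bm{o}$: setting $\bm{o}$ to the encoding in which the prefix $X_{\pi(1)},\dots,X_{\pi(i-1)}$ is observed and every remaining coordinate is marked unobserved (value zero), the head for variable $\pi(i)$ outputs exactly $p_{\theta_{\pi(i)}}(X_{\pi(i)}\mid\bm{o})=p\bigl(X_{\pi(i)}\mid X_{\pi(1)},\dots,X_{\pi(i-1)}\bigr)$. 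Multiplying the $M$ factors selected in this staged manner reconstructs $p(X)$; operationally this is the ancestral sampling loop, where one samples a single variable, appends it to $\bm{o}$, re-queries the cluster, and repeats until all variables are filled in. This shows the cluster determines the joint and closes the equivalence.

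The hard part will be justifying that the reconstruction is well defined independently of the chosen ordering $\pi$, that is, that the conditionals produced by the separately-headed classifiers are mutually compatible and arise from one and the same joint. A priori a bag of full conditionals need not be the conditionals of any distribution (the classical compatibility problem), and were the heads trained in isolation the telescoping product could depend on $\pi$. I would resolve this by arguing at the population optimum: under the modeling assumption that each head attains the Bayes-optimal conditional of the underlying data distribution, every factor above is a genuine conditional of $p(X)$, the chain-rule product collapses to $p(X)$ for every $\pi$, and order-independence becomes automatic. If instead one wants a self-contained recovery from the full conditionals $p(X_i\mid X_{-i})$ alone, I would appeal to the Brook's-lemma / Hammersley–Clifford reconstruction, which recovers $p(X)$ up to a normalizing constant from compatible full conditionals, and then note that the shared-trunk parametrization and common training objective are precisely what drive the heads toward compatibility in the limit.
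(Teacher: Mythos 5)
Your converse direction is, at its core, exactly the paper's entire proof: the paper simply writes the chain-rule factorization $P(X_1,\dots,X_M)=P(X_1)P(X_2|X_1)\cdots P(X_M|X_1,\dots,X_{M-1})$ and declares each factor to be one of the discriminative classifiers, queried with the prefix encoded in $\bm{o}$ --- which is precisely your step of realizing $p(X_{\pi(i)}\mid X_{\pi(1)},\dots,X_{\pi(i-1)})$ as the head for variable $\pi(i)$ with the prefix observed and everything else set to zero. Where you genuinely go beyond the paper is in scope and rigor. First, you prove the easy direction (the joint determines every conditional in the cluster by marginalization), which the paper does not state at all; without it the word ``equals'' in the theorem has only one direction of content. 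Second, and more importantly, you flag the compatibility problem: an arbitrary family of separately-headed conditionals need not be the conditionals of any single joint distribution, so the telescoping product could depend on the ordering $\pi$. The paper's one-line proof is silent on this and implicitly assumes the classifiers already are the conditionals of some joint --- exactly the gap you identify. Your resolution (work at the population optimum where each head equals the Bayes conditional of the data distribution, making order-independence automatic; or, for a self-contained reconstruction from full conditionals, invoke Brook's lemma, which additionally needs a positivity condition on the support) is what an honest version of the theorem requires; your closing remark that the shared trunk and common objective ``drive the heads toward compatibility'' is heuristic rather than proven, but you label it as such and it is not load-bearing. In short: your proof is correct, subsumes the paper's argument as its central step, and repairs the imprecision that the paper leaves unaddressed.
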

\begin{proof}
The equivalent generative model can be factorized to
\begin{equation}
\begin{aligned}
P(X_1,...,X_M)=&\prod P(X_1)P(X_2|X_1)P(X_3|X_1, X_2)...\\
&P(X_M|X_1,...,X_{M-1})
\end{aligned}
\end{equation}
with each factor representing for a discriminative classifier.
\end{proof}
The sharing parameter structure of EAR can achieve the best performance any independent classifiers can achieve. Before going any further, we first introduce a lemma first introduced in \cite{cybenko1988} about the capacity of neural networks
\begin{lemma}
\label{lemmaT}
Any function can be approximated by a three layer continuous valued neural networks $T_{W_1,W_2}$.
\end{lemma}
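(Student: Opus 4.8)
The plan is to read the statement in its standard form: the claim is the universal approximation theorem, so I would restrict to continuous target functions $f$ on a compact domain (say the unit cube $I_n \subset \mathbb{R}^n$) and measure approximation in the supremum norm, and take $T_{W_1,W_2}$ to denote the single-hidden-layer maps $x \mapsto \sum_{j} \alpha_j\,\sigma(\bm{w}_j^{\top}x + b_j)$ whose two weight matrices $W_1$ (collecting the $\bm{w}_j, b_j$) and $W_2$ (collecting the $\alpha_j$) index the three layers input--hidden--output, with $\sigma$ a fixed sigmoidal activation. The goal is then to show that the linear span $\mathcal{N}$ of the ridge functions $\sigma(\bm{w}^{\top}x+b)$ is dense in $C(I_n)$.

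First I would set up the functional-analytic framework following Cybenko. $\mathcal{N}$ is a linear subspace of $C(I_n)$; if it fails to be dense, its closure $\overline{\mathcal{N}}$ is a proper closed subspace, so the Hahn--Banach theorem supplies a nonzero bounded linear functional $L$ on $C(I_n)$ that vanishes on $\overline{\mathcal{N}}$. By the Riesz representation theorem, $L(g)=\int_{I_n} g\,d\mu$ for a finite signed regular Borel measure $\mu \neq 0$. Since every ridge function lies in $\mathcal{N}$, this yields $\int_{I_n}\sigma(\bm{w}^{\top}x+b)\,d\mu(x)=0$ for all $\bm{w}$ and $b$.

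The heart of the argument is showing that a sigmoidal $\sigma$ is \emph{discriminatory}, i.e.\ that the vanishing of all these integrals forces $\mu=0$. I would exploit the limiting behaviour of $\sigma$: for fixed $\bm{w},b$ the rescaled functions $\sigma(\lambda(\bm{w}^{\top}x+b)+\varphi)$ are uniformly bounded and converge pointwise as $\lambda\to\infty$ to an indicator-type function of the half-space $\{\,\bm{w}^{\top}x+b>0\,\}$, with a boundary adjustment governed by $\varphi$. Dominated convergence then turns the hypothesis into the statement that $\mu$ assigns zero mass to every such half-space and hyperplane. Finally I would show that a measure annihilating all half-spaces must itself vanish, which follows from the injectivity of the Fourier transform: the hypothesis forces $\hat{\mu}$ to vanish identically, whence $\mu=0$, contradicting $\mu\neq 0$, and density follows.

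The main obstacle I anticipate is exactly this discriminatory step, converting ``integrates to zero against every ridge activation'' into ``is the zero measure.'' Everything preceding it is standard (Hahn--Banach together with Riesz representation), but the discriminatory property is where the analytic structure of the sigmoid, the dominated-convergence passage to half-space indicators, and the Fourier uniqueness argument must all be assembled carefully. A more elementary but less general alternative is a constructive route: approximate $f$ by simple functions built from indicators of boxes, and approximate each half-space indicator by a steep sigmoid; there the difficulty migrates into composing ridge functions to localize bumps in dimension $n>1$, which is precisely the subtlety the functional-analytic proof sidesteps.
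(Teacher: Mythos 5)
Your proposal is correct, and it is essentially the paper's own approach: the paper gives no proof of this lemma, deferring entirely to \cite{cybenko1988}, and what you have reconstructed is precisely Cybenko's original argument (Hahn--Banach plus Riesz representation to produce an annihilating measure, then the discriminatory-sigmoid step via dominated convergence to half-space indicators and Fourier uniqueness). Your sharpening of the paper's loose statement ``any function'' to continuous functions on a compact domain under the supremum norm is also the correct reading, since without such a restriction the claim as written is false.
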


Formally, suppose we have M independent classifiers $\bigcup_i p_{\theta_i}^*(\bm{X}_i|\bm{X})$ which achieve best performance for given metric $\epsilon$ by metric function $f$ training on dataset $\mathcal{D}$. Our EAR model is denoted as $EAR_{\theta_{EAR}}(X|\bm{o})$. Then we have the following theorem
\begin{theorem}
$\exists EAR^*_{\theta_{EAR^*}}(X|\bm{o}), \forall i$, we have \\
 $f(EAR^*_{\theta_{EAR^*}}(X_i|\bm{o}))=f(p^*_{\theta^*_{i}}(X_i|\bm{o}))$
\end{theorem}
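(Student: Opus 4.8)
The plan is to realize the required $EAR^*$ by \emph{concatenating} the $M$ optimal independent classifiers inside a single shared hidden representation, exploiting the fact that in the EAR architecture only the last layer is allowed to differ across variables. First I would invoke Lemma~\ref{lemmaT} on each of the $M$ optimal classifiers: since each $p^*_{\theta^*_i}(X_i|\bm{o})$ is an arbitrary function of $\bm{o}$, it can be written as a three-layer network $T_{W_1^{(i)},W_2^{(i)}}$ with a first-layer weight matrix $W_1^{(i)}$ feeding a hidden layer and an output weight matrix $W_2^{(i)}$ producing the prediction of $X_i$. This yields $M$ explicit two-weight realizations, each attaining the optimal metric value $f(p^*_{\theta^*_i}(X_i|\bm{o}))$.

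Next I would build the shared part of the EAR. Because the EAR places no restriction on the width of its hidden layers, I can take the EAR hidden layer to be the stacking of all $M$ individual hidden layers, i.e.\ set the shared first-layer weights to the block-stacked matrix $W_1^{EAR}=[\,W_1^{(1)};\dots;W_1^{(M)}\,]$. Evaluated on any input $\bm{o}$, this shared layer simultaneously computes the hidden activations of every one of the $M$ classifiers; no information used by any individual classifier is lost, because every $W_1^{(i)}$ appears verbatim as a block.

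The last layer of the EAR is \emph{not} shared, so I am free to choose a separate output weight matrix for each variable $i$. I would set the output weights for variable $i$ to equal $W_2^{(i)}$ on the block of hidden units that came from classifier $i$ and to be zero on all other blocks. With this choice the output head for variable $i$ reproduces exactly $T_{W_1^{(i)},W_2^{(i)}}(\bm{o})=p^*_{\theta^*_i}(X_i|\bm{o})$, so $EAR^*_{\theta_{EAR^*}}(X_i|\bm{o})=p^*_{\theta^*_i}(X_i|\bm{o})$ for every $i$, which immediately gives the claimed equality $f(EAR^*_{\theta_{EAR^*}}(X_i|\bm{o}))=f(p^*_{\theta^*_i}(X_i|\bm{o}))$.

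The step I expect to be the main obstacle is justifying that sharing the hidden layers costs nothing, that is, that forcing all $M$ heads to read from one common representation does not prevent any head from matching its independent optimum. The block-stacking construction resolves this by making the shared representation a superset of every individual representation and by using the per-variable freedom of the (unshared) last layer to mask out the irrelevant blocks; the zero weights guarantee the heads cannot interfere with one another. A secondary point to make rigorous is that Lemma~\ref{lemmaT} supplies only approximation, so strictly I would argue that the metric value $f$ is attained in the limit, or within the tolerance $\epsilon$ already fixed in the statement, rather than as an exact functional identity.
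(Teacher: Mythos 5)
Your proposal is correct and takes essentially the same route as the paper: both invoke Lemma~\ref{lemmaT} once per variable, concatenate the resulting first-layer weight matrices into the shared hidden layer, and use a block-diagonal (zero off-block) last layer so each head reads only its own classifier's hidden units. Your closing remark about the metric being attained only up to the approximation tolerance is a caveat the paper's own proof silently elides, so it is a welcome refinement rather than a deviation.
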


\begin{proof}
By applying Lemma \ref{lemmaT}, $\forall i, \exists T_{W_1^i,W_2^i}$, \\
making $f(T_{W_1^i,W_2^i})=f(p^*_{\theta^*_{i}}(X_i|X))$.
We have
\begin{equation}
\nonumber
\begin{aligned}
W_1^{EAR}=&[W_1^1,...,W_1^M], \\
W_2^{EAR}=&
\begin{bmatrix}
W_2^1&&0\\
&\ddots&\\
0&&W_2^M
\end{bmatrix}
\end{aligned}
\end{equation}
making
\begin{equation}
\nonumber
\begin{aligned}
\forall i,f(EAR_{[W_1^{EAR},W_2^{EAR}]}(X_i|X))=f(p^*_{\theta^*_{i}}(X_i|X))
\end{aligned}
\end{equation}
\end{proof}
The loss function of EAR can be written in three terms:
\begin{equation}
\label{eq.ear}
\begin{aligned}
\mathcal{L}&=\mathcal{L}_0+\mathcal{L}_1+\mathcal{L}_2 \\
\mathcal{L}_{0}&=\alpha||Mask\odot(X-\hat{X})||_2^2 \\
\mathcal{L}_{1}&=\beta||(1-Mask)\odot(X-\hat{X})||_2^2 \\
\mathcal{L}_{2}&=\gamma||\theta||_1  \\
\end{aligned}
\end{equation}
where $\hat{X}$ is the output of EAR. The first term $\mathcal{L}_{0}$ represents for latent variable prediction error. The second term $\mathcal{L}_{2}$ represents for stability regularizer which forces the network to predict what's observed. The last item $\mathcal{L}_{3}$ represents for parameter regularizer. Here we use L1 normalizer because BN is usually sparse thus we want the network to be sparse too. The $Mask$ is 1 for latent variables and 0 for observed variables.

\subsection{Extended Autoregressive Model with Adversary Loss}
We also present another model which adds an adversary loss on EAR called EARA. The EAR model is trained to produce $\hat{X}$ trying to fool discriminator $D$ and the discriminator $D$ tries to discriminate ground truth $X$ from generated distribution $\hat{X}$. The difference from GAN is that when optimizing EAR model, we add EAR loss together with generator loss. The training procedure is shown in Algorithm \ref{algoeara}.

\begin{algorithm}
\caption{The training procedure of EARA}
\DontPrintSemicolon
\For{number of training iterations}{
Sample minibatch of samples $\bm{X^1},...,\bm{X^n}$.\;
Generate minibatch of observation vectors $\bm{o^1},...,\bm{o^n}$.\;
\tcp{Update discriminator:}
\begin{equation}
\nonumber
\begin{aligned}
\theta_D+=&\bigtriangledown_{\theta_{D}}\frac1n\sum_{i=1}^n[logD(\bm{X^i})+\\
&log(1-D(EAR(o^i)))].
\end{aligned}
\end{equation}\;
Sample minibatch of samples $\bm{X^1},...,\bm{X^n}$.\;
Generate minibatch of observation vectors $\bm{o^1},...,\bm{o^n}$.\;
\tcp{Update EAR:}
\begin{equation}
\nonumber
\begin{aligned}
\theta_{EAR}+=&\bigtriangledown_{\theta_{EAR}}\frac1n\sum_{i=1}^n[log(1-D(EAR(o^i)))+\\
&\mathcal{L}].
\end{aligned}
\end{equation}\;
\tcp{$\mathcal{L}$ is the same as Equation \ref{eq.ear}}
}
\label{algoeara}
\end{algorithm}
\subsection{Adaption of Generative Models}
\subsubsection{Restricted Boltzmann Machine}
RBM \cite{freund1992} is an energy based two-layer generative model including visible layer $\bm{v}$ and hidden layer $\bm{h}$. The energy of RBM can be written as:
\begin{equation}
E(\bm{h,v})=-(\bm{a^\top v}+\bm{b^\top h}+\bm{v^\top Wh})
\end{equation}
where $\bm{a}$, $\bm{b}$ and $\bm{W}$ are parameters. From energy function, joint probability distribution of $\bm{v}$ and $\bm{h}$ can be written as:
\begin{equation}
P(\bm{h},\bm{v})=\frac{exp(-E(\bm{h},\bm{v}))}{\sum_{\bm{v},\bm{h}}exp(-E(\bm{h},\bm{v}))}
\end{equation}
The conditional probability distribution is
 \begin{equation}
 \label{RBM_condition_prob}
 \begin{aligned}
P(v_i|\bm{h}) = \sigma(a_i+\bm{W}_i\bm{h})\\
P(h_j|\bm{v}) = \sigma(b_j+\bm{W}_j\bm{v})
\end{aligned}
 \end{equation}
where $\sigma(x)=1/(1+e^{-x})$ is the sigmoid function.
Our RBM is trained using Contrastive Divergence (CD) algorithm \cite{Carreira2005}. The gradient of parameter is
\begin{equation}
\begin{aligned}
\Delta \bm{W}=&\epsilon\frac{\partial logP(\bm{v})}{\partial \bm{W}} \\
=&\epsilon (<\bm{vh}>_{data}-<\bm{vh}>_{model})\\
\Delta \bm{a}=&\epsilon (<\bm{v}>_{data}-<\bm{v}>_{model})\\
\Delta \bm{a}=&\epsilon (<\bm{h}>_{data}-<\bm{h}>_{model})\\
\end{aligned}
\end{equation}
Where $\epsilon$ is the learning rate. $<\cdot>_{model}$ is the expectation under the distribution defined by model. $<\cdot>_{model}$ is the expectation under the distribution of data. The $\bm{h}$ in $<\bm{vh}>_{data}$ is obtained by sampling from $P(\bm{h}|\bm{v})$.
We adapt RBM for heterogeneous inference by replacing the input $\bm{v}$ with our observation vector $\bm{o}$. And replacing $\bm{v}$ in $<\bm{vh}>_{data}$ with $X$ when updating parameters in CD algorithm. During testing, $\hat{X}$ is derived using Equation \ref{RBM_condition_prob} and $\bm{h}$ is sampled from $P(\bm{h}|\bm{o})$.
\subsubsection{Wasserstein Generative Adversarial Networks}
Wasserstein Generative Adversarial Networks (WGAN)\cite{arjovsky2017} is a more stable version of GAN. WGAN trains a generator network $G(z;\theta_G)$ to produce samples from data distribution $p_{data}(x)$ from noise vector $z$ and a discriminator network $D(X;w_D)$ to discriminate samples produced by generator from real data. The generator and discriminator plays a Min-Max game in turn. After equilibrate the generator will generate samples to distribution $p_{data}(x)$.
The optimization function we use for heterogeneous inference is
\begin{equation}
\begin{aligned}
\mathop{max}\limits_{D}\mathop{min}\limits_{G}\mathbb{E}_{X\sim\mathbb{P}_{X}}[D(X;w_D)]-\\
\mathbb{E}_{\bm{o}\sim\mathbb{P}_{(\bm{o}|X)}}[D(G(\bm{o};\theta_G);w_D)]
\end{aligned}
\end{equation}
where $w_D$ and $\theta_G$ are network parameters, $\mathbb{P}_{(\bm{o}|X)}$ is the generating process of observation vector $\bm{o}$. The training procedure is the same as standard WGAN.

\subsubsection{Conditional Generative Adversarial Networks}
Conditional Generative Adversarial Networks (CGAN) \cite{mirza2014} an extension to standard GAN by conditioning generator and discriminator with label $\bm{y}$. We adapt CGAN by replacing label $\bm{y}$ with observation vector $\bm{o}$. Then the optimization function is:
\begin{equation}
\begin{aligned}
\mathop{max}\limits_{D}\mathop{min}\limits_{G}\mathbb{E}_{X\sim\mathbb{P}_{\bm{X}}, \bm{o}\sim\mathbb{P}_{\bm{o}|\bm{X}}}[D(\bm{X}|\bm{o} ;w_D)]-\\
\mathbb{E}_{\bm{z}\sim\mathbb{P}_{\bm{z}}, \bm{o}\sim\mathbb{P}_{(\bm{o}|\bm{X})}}[D(G(\bm{z}|\bm{o};\theta_G);w_D)]
\end{aligned}
\end{equation}

\subsubsection{Variational Autoencoder}
VAE \cite{kingma2013} use stochastic variational inference to deal with intractable posterior distribution by maximizing variational lower bound
\begin{equation}
\mathcal{L}=-D_{KL}(q(\bm{z}|\bm{x})||p(\bm{z}))+\mathbb{E}_{q(\bm{z}|\bm{x})}[-logq(\bm{x}|\bm{z})]
\end{equation}
We omit parameters for conciseness. Like autoencoders, VAE tries to restore the input of encoder at the output of decoder as the second term of $\mathcal{L}$ shows. When input is corrupted, the lower bound becomes tighter \cite{im2017}
\begin{equation}
\begin{aligned}
\mathcal{L}_{denoising}&=\mathbb{E}_q(\bm{z}|\bm{x})[logp(\bm{x},\bm{z})-logq(\bm{z}|\tilde{\bm{x}})]
\end{aligned}
\end{equation}

We adapt VAE for heterogeneous inference is by leveraging denoising lower bound $\mathbb{L}_{denoising}$ and replacing $\tilde{\bm{x}}$ with observation vector $\bm{o}$ and replacing $\bm{x}$ with sample vector $X$.
\subsubsection{Conditional Variational Auto-encoder}
Similar to CGAN, Conditional Variational Auto-encoder (CVAE) \cite{newkingma2014} makes encoder and decoder conditioned on label $\bm{y}$. We make the same adaption as in CGAN for heterogeneous inference by replacing label $\bm{y}$ with observation vector $\bm{o}$. The optimization lower bound is
\begin{equation}
\begin{aligned}
\mathcal{L}=&\mathbb{E}_{q(\bm{z}|X,\bm{o}}[logp(X|\bm{o},\bm{z})+logp(\bm{o})+\\
&logp(\bm{z})-logq(\bm{z}|X,\bm{o})]
\end{aligned}
\end{equation}
\section{Related Work}
\cite{freund1992} introduces RBM as a bipartite undirected graphical model with one layer of visible units $v$ and one layer of hidden units $h$. RBM models the joint probability distribution of $v$ and $h$ by assigning an energy function: $E(v,h)=-a^{\top} v-b^{\top} h+v^{\top} Wh$. Then $P(v,h)$, $P(v|h)$, $P(h|v)$ can be derived from $E(v,h)$. The original training procedure of RBM need expensive Gibbs sampling procedure. Later \cite{Carreira2005} introduced CD algorithm which speeds up RBM greatly by reducing the sampling steps of $E_{P(v,h)}(vh)$.
\cite{salakhutdinov2007} applies RBM to collaborative filtering which can be seen as a denoising RBM since the model assumes part of input is loss. The inference is performed by calculating latent representation $h$ and then calculating the distribution of lost input using $h$. \cite{Yichuan2012} propose a robust RBM structure which leverage two RBMs with one model the missing state of variables.

\cite{kingma2013} introduces VAE to enable inference and learning in directed probabilistic models, in the presence of continuous latent variables with intractable posterior distribution. By adopting reparameterization trick and Stochastic Gradient Variational Bayes (SGVB) estimator, VAE can be trained using stochastic gradient descent (SGD) thus scale to large datasets. \cite{newkingma2014} proposes CVAE by adding condition $\bm{y}$ to input layer and stochastic hidden layer. Thus CVAE can generate samples conditioned on $y$. The variety is useful in semi-supervised learning where only part of data has been labeled. \cite{im2017} proposes denoising VAE by training VAE to reconstruct clean inputs with noise injected at the input level and proposes a denoising variational lower bound to make training criterion tractable. The denoising VAE yields better average log-likelihood.

\cite{goodfellow2014} introduces GAN to learn the distribution of data with a generator and discriminator. The advantage of GAN to previous generative models is that no Markov chains or unrolled approximate inference networks are needed during training or generating new samples. Standard GAN uses determined transformation from random Gaussian variables to samples in data space. However, the input of generator is not necessarily whitened Gaussian noises. So we use observation vector as the input of generator in this paper. Similar to CVAE, there is also a variant of GAN called CGAN \cite{mirza2014}. CGAN is constructed by feeding the condition $y$ to both generator and discriminator. CGAN can generate samples consist of provided condition (such as the label of wanted samples).


\section{Experiments}
\subsection{Experimental Setup}
Our experiments are conducted with several publicly available BN datasets. The training data are obtained by Gibbs sampling and the test data is obtained by randomly choosing observations and performing inference using \emph{Junction Tree Algorithm} \cite{Lauritzen1988}. The size of training data is 10000 for all dataset. The size of test data is 1000 or less (if we cannot generate 1000 different observations).
The network we use in EAR is a fully connected network with Relu activation in the intermediate layer and linear activation in the output layer. The size of hidden layers is 64, 128, 128, 64. One exception is that our synthesized datasets because they are extremely small. The size of hidden layers is 10, 10 for synthesized datasets. The generator of EARA uses the same network configuration as EAR. The discriminator of EARA consists of a three-layer neural network whose size of hidden layer is 128. The optimizer we use for EAR and EARA is momentum optimizer with momentum equal 0.9.

The RBM model we use consists of 36 binary hidden variable and the K for CD algorithm is 10. The encoder and decoder of our VAE/CVAE implementation both consist of a three-layer neural networks whose hidden layer size is 128. The generator and discriminator of WGAN/CGAN is the same as EARA. The optimizer for WGAN/CGAN is RMSProp. We pick 20\% of test data for validation. $\alpha$, $\beta$ and $\gamma$ of EAR and EARA are 1.0, 1.0, 0.005 respectively.
\subsubsection{Datasets}
\textbf{Alarm -} Alarm dataset \cite{Beinlich1989} is the abbreviation for A Logical Alarm Reduction Mechanism. It is a diagnostic application used to explore probabilistic inference in belief networks. It has 37 nodes, 46 arcs, 509 parameters. The average Markov blanket size is 3.51, average degree is 2.49, maximum in-degree is 4.

\textbf{Asia -} Asia dataset \cite{Lauritzen1990} (sometimes called Lung Cancer dataset) is a small BN dataset. It contains 8 nodes, 8 edges and 18 parameters. The size of average Markov blanket is 2.5, the size of average degree is 2.0 and maximum in-degree is 2.

\textbf{Child -} Child dataset \cite{spiegelhalter1992} is an expert system for disease diagnose. It contains 20 nodes, 25 edges and 230 parameters. The size of average Markov blanket is 3.0, the size of average degree is 1.25 and maximum in-degree is 2.

\textbf{Insurance -} Insurance dataset \cite{binder1997} contains 27 nodes, 52 edges and 984 parameters. The size of average Markov blanket is 5.19, the size of average degree is 3.85 and maximum in-degree is 3.

\textbf{Survey -} Survey dataset \cite{scutari2014} contains 6 nodes, 6 edges and 21 parameters. The size of average Markov blanket is 2.67, the size of average degree is 2.0 and maximum in-degree is 2.

\textbf{WIN95PTS -} WIN95PTS \cite{win95pts} is an expert system for windows fault diagnose. It contains 76 nodes, 112 edges and 574 parameters. The size of average Markov blanket is 5.92, the size of average degree is 2.95 and maximum in-degree is 7.

\subsubsection{Evaluating Metrics}
\textbf{Absolute Deviation -} Absolution Deviation (AD) indicates the average absolute deviation error between predicted distribution and ground truth distribution which can be written as follows:
\begin{equation}
AD=\frac1N\sum_i^N\frac1{M}\sum_j^{M}\frac{1}{K_j}\sum_k^{K_j}|(max\{min\{\hat{X}_{j,k}^i,1\},0\}-p(X_j^i=X_{j,k}^i))|
\end{equation}
where $p(X_j^i=X_{j,k}^i))$ means the probability of j-th variable of i-th sample takes value k.
AD is similar to K-L divergence but is more intuitional for humans.

\textbf{KL Divergence -} KL divergence measures the deviation of predicted distribution and ground truth distribution
\begin{equation}
\begin{aligned}
KL=&-\frac1N\sum_i^N\frac1M\sum_j^M\sum_k^{K_j}P(X_j^i=X_{j,k})\cdot\\
&log\frac{max\{min\{\hat{X}_{j,k}^i,1\},e^{-6}\}}{P(X_j^i=X_{j,k})}
\end{aligned}
\end{equation}
KL divergence is a common metric for probability distribution comparison.

\textbf{Classification Accuracy -} The purpose of inference is mostly used to determine the most probable value of latent variable. Classification Accuracy represents for the average accuracy of classification and can be written as
\begin{equation}
ACC=\frac1N\sum_i^N\frac1M\sum_i^M1_{argmax(\hat{X}_j^i)=argmax(X_j^i)}
\end{equation}

\subsection{Results}
\label{section.result}
\begin{table}[ht]
\centering
\caption{Results for heterogeneous inference using different generative models.}
\label{result}
\begin{tabular}{c|c|c|c|c|c|c|c|c|c|c|c|c|c|c|c|c|c|c}
\hline
\multirow{2}{*}{Model}&\multicolumn{3}{|c|}{Alarm}&\multicolumn{3}{|c|}{Asia}&\multicolumn{3}{|c|}{Child}&\multicolumn{3}{|c|}{Insurance}
&\multicolumn{3}{|c|}{Survey}&\multicolumn{3}{|c}{Win95pts}\\\cline{2-19}
&AD&KL&ACC&AD&KL&ACC&AD&KL&ACC&AD&KL&ACC&AD&KL&ACC&AD&KL&ACC\\\hline
RBM&0.13&0.23&0.85&0.30&0.43&0.63&0.13&0.18&0.74&0.18&0.38&0.61
&0.18&0.03&0.80&0.16&0.16&0.91
\\\hline
WGAN&0.23&0.08&0.84&0.27&1.57&0.76&0.17&0.04&0.76&0.23&1.49&0.59
&0.34&2.28&0.65&0.23&0.24&0.91
\\\hline
CGAN&0.24&1.81&0.69&0.37&1.05&0.78&0.35&3.16&0.44&0.41&2.57&0.45
&0.36&3.01&0.60&0.37&1.31&0.68
\\\hline
VAE&0.18&2.39&0.81&0.45&4.57&0.46&0.24&4.38&0.63&0.24&5.14&0.54
&0.24&3.14&0.89&0.13&1.13&0.91
\\\hline
CVAE&0.10&\textbf{0.06}&0.90&0.08&0.13&0.88&0.13&\textbf{0.03}&0.79&0.16&0.48&0.65
&0.03&\textbf{0.00}&0.93&0.27&0.25&0.92
\\\hline
EAR&\textbf{0.05}&0.08&\textbf{0.96}&\textbf{0.07}&\textbf{0.05}&\textbf{0.91}&\textbf{0.09}&0.13&\textbf{0.82}&\textbf{0.09}&\textbf{0.13}
&\textbf{0.82}&\textbf{0.02}&\textbf{0.00}&\textbf{0.98}&\textbf{0.07}&\textbf{0.14}&\textbf{0.93}
\\\hline
EARA&0.10&0.41&0.92&0.12&0.24&0.90&\textbf{0.09}&0.13&\textbf{0.82}&0.15&0.67&\textbf{0.82}&0.15&0.44&0.96&0.11&0.31&0.92
\\\hline
\end{tabular}
\end{table}

\begin{figure}[ht]
\vskip 0.2in
\begin{center}
\centerline{\includegraphics[scale=0.33]{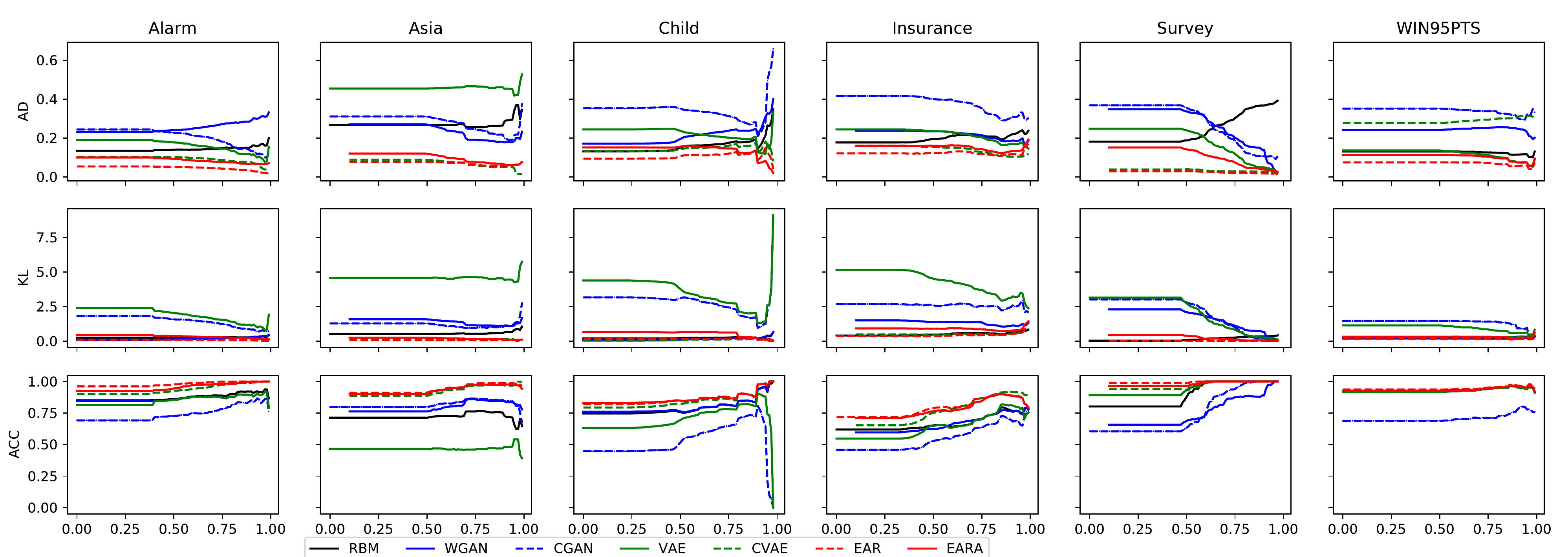}}
\caption{Comparison of models with different thresholds.}
\label{figure_threshold}
\end{center}
\vskip -0.2in
\end{figure}
The results of our experiment are shown in Table \ref{result}. In most cases, EAR has better accuracy than other models. The discriminator of WGAN can easily discriminate fake distribution from truth distribution and without reconstruction loss the generator struggles to learn the mapping between observation and posterior distribution. The adversary loss we add to EAR cause gradient conflict with EAR loss and have slightly inferior performance than EAR. We can also conclude that CVAE has similar performance to EAR. An obvious explanation is that decoder of CVAE has a very similar structure to EAR. RBM, WGAN, CGAN, VAE are inferior in almost all datasets.
It can also be seen that the size of BN datasets has little to do with inference accuracy of GMs which means these models can apply to more complex situations where traditional BN cannot be trained.

The posterior distribution might be hard to classify if $max\quad p(v_i=v_i^j)$ is very small. We have done another serial of experiments to demonstrate how accurate these models are with different thresholds. The results are shown in Figure \ref{figure_threshold}. With a higher threshold, these models tend to have higher ACC and lower AD, KL which means they make fewer mistakes when posterior distribution has less variance.

\subsection{Case Study}
Experiments in Section \ref{section.result} only consider a single metric for a system composed with all kinds of inference scenarios. Although it is a common practice for modern machine learning researches, such approach can not reveal the performance of basic inference steps. As a case study, we apply different GMs on Markov Border Inference to provide white box insights. In a directed graph model, the Markov border $\wp_v$ of a variable $v$ consists of its parent variables, child variables and the parent variables of child variables. Given $\wp_v$, $v$ is not affected by other variables. The inference from Markov border is the basic inference step for belief propagation. We will discuss the inference from Markov border in this section and given theoretical explanation for the results.

\subsubsection{Synthesized Datasets}
We build three synthesized datasets for three kinds of Markov borders. They are shown in Figure \ref{figure_markov_border}. All variables are binary and the conditional probability and prior probability are chosen randomly.
 The training procedure is similar to standard datasets. In the test procedure, we only care about the posterior probability of target variable when other variables are randomly observed. The results are shown in Table \ref{table_toy}. The inference from child variable to parent variable is a hard task for all models.

We offer an explanation for this phenomenon from the perspective of training data.
Assuming the prior probability of parent variable is $p_{parent} \sim \mathbb{B}(\alpha)$, where $\mathbb{B}$ is Bernoulli distribution and $\alpha \sim p_\alpha$. The conditional probability distribution of a child variable given the value of parent variable is

\begin{equation}
\begin{aligned}
&p_{condition}& \sim \mathbb{B} (\beta)  \quad if\quad parent=0\\
&& \sim \mathbb{B}(\gamma) \quad if\quad parent=1\\
&\beta \sim p_\beta\\
&\gamma \sim p_\gamma
\end{aligned}
\end{equation}
. Then the probability distribution of a child variable is $p_{child} \sim \mathbb{B}(\delta)$, with $\delta=\alpha\beta+\gamma-\alpha\gamma$. Under the hypothesis that hyper parameters are from the same distribution (Occam' razor principal), then $p_\alpha=p_\beta=p_\gamma=p_\pi$, the expectation of $\delta$  $\mathbb{E}_\delta=\mathbb{E}_\alpha\mathbb{E}_\beta+\mathbb{E}_\gamma-\mathbb{E}_\alpha\mathbb{E}_\gamma=\mathbb{E}_\pi$. But we have the variance $\sigma_\delta\ge\sigma_\alpha$. The variance of the distribution of child variable is larger than parent variable. So the training samples we generate according to Gibbs sampling are more skewed distributed which is a bad attribute for discriminative models. Some value of variable might occur too often or too rare making discriminative models hard to train.
\begin{figure}[ht]
\vskip 0.2in
\begin{center}
\caption{Three synthesized examples for Markov border inference.}
\label{figure_markov_border}
\begin{subfigure}[b]{0.3\textwidth}
\includegraphics[width=\linewidth]{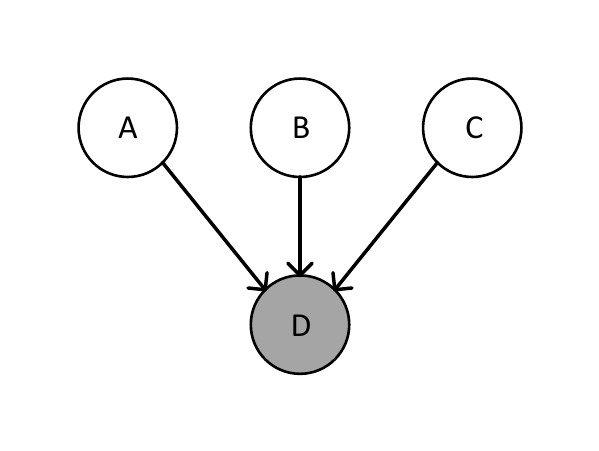}
\caption{(Dataset A) Inference from parent variable to child variable. }
\end{subfigure}
\begin{subfigure}[b]{0.3\textwidth}
\includegraphics[width=\linewidth]{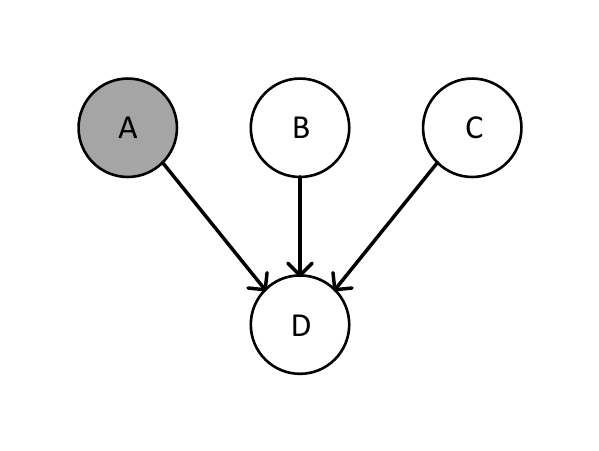}
\caption{(Dataset B) Inference from uncle variable to parent variable.}
\end{subfigure}
\begin{subfigure}[b]{0.3\textwidth}
\includegraphics[width=\linewidth]{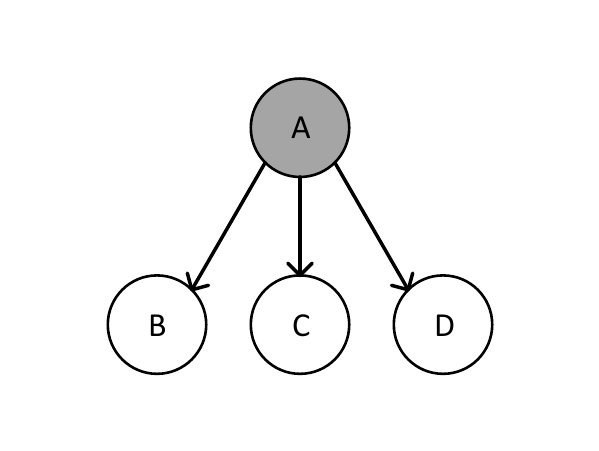}
\caption{(Dataset C) Inference from child variable to parent variable.}
\end{subfigure}
\end{center}
\vskip -0.2in
\end{figure}

\begin{table}[ht]
\caption{Results for Markov border inference with synthesized datasets.}
\label{table_toy}
\vskip 0.15in
\begin{center}
\begin{small}
\begin{sc}
\begin{tabular}{c|c|c|c|c|c|c|c|c|c}
\hline
\multirow{2}{*}{Model}&\multicolumn{3}{|c|}{A}&\multicolumn{3}{|c|}{B}&\multicolumn{3}{|c}{C}
\\\cline{2-10}
&AD&KL&ACC&AD&KL&ACC&AD&KL&ACC\\\hline
RBM&0.20&0.24&0.62&0.04&\textbf{0.03}&\textbf{1.00}&0.25&0.20&0.81
\\\hline
WGAN&0.22&0.17&0.62&0.22&0.14&\textbf{1.00}&0.17&0.11&0.81
\\\hline
CGAN&0.28&2.09&0.87&0.49&3.82&0.50&0.25&1.85&\textbf{0.87}
\\\hline
VAE&0.32&3.01&0.62&0.10&0.66&\textbf{1.00}&0.24&2.04&0.81
\\\hline
CVAE&0.13&0.12&\textbf{1.00}&0.06&0.04&\textbf{1.00}&0.22&0.16&0.75
\\\hline
EAR&\textbf{0.08}&\textbf{0.06}&0.93&\textbf{0.04}&0.04&\textbf{1.00}&\textbf{0.14}
&\textbf{0.03}&0.81
\\\hline
EARA&0.21&0.87&0.87&0.08&0.42&\textbf{1.00}&0.15&0.36&0.81
\\\hline
\end{tabular}
\end{sc}
\end{small}
\end{center}
\vskip -0.1in
\end{table}


\subsubsection{Comparison with Naive Classifier}
We build a twin classification network to EAR model in three synthesized datasets. We refer the classifier as Naive Classifier (NC). The only modification compared to EAR is that we add a mask when loss is calculated so that non-target variables don't propagate gradient. The results are shown in Table \ref{table_nc}. There is no obvious performance degradation of EAR compared with NC and the computation overhead added to NC is negligible both in space and time.
\begin{table}[ht]
\caption{Comparison on Markov border inference with synthesized datasets.}
\label{table_nc}
\vskip 0.15in
\begin{center}
\begin{small}
\begin{sc}
\begin{tabular}{c|c|c|c|c}
\hline
Dataset& Model& AD&KL&ACC\\
\hline
\multirow{2}{*}{A}
&EAR&\textbf{0.08}&\textbf{0.06}&\textbf{0.93}\\\cline{2-5}
&NC&0.10&0.11&\textbf{0.93}\\
\hline
\multirow{2}{*}{B}
&EAR&0.04&0.04&\textbf{1.00}\\\cline{2-5}
&NC&\textbf{0.03}&\textbf{0.03}&\textbf{1.00}\\
\hline
\multirow{2}{*}{C}
&EAR&0.14&\textbf{0.03}&\textbf{0.81}\\\cline{2-5}
&NC&\textbf{0.13}&0.07&\textbf{0.81}\\
\hline
\end{tabular}
\end{sc}
\end{small}
\end{center}
\vskip -0.1in
\end{table}

\section{Discussion}
This work is closely related to all kinds of denoising models in a way that the observation vector $\bm{o}$ in this paper is similar to the corrupted input in denoising models. However, there are several differences. Firstly, observation vector has a much higher entropy than a corrupted or occlusive input. Certain patterns are presumed in previous works \cite{Yichuan2012,im2017}, however there is no constraint for $\bm{o}$ in this work. An extreme case could be that there are no observed variables in the observation vector at all. Therefore, modeling the observation state is not a feasible option. Secondly, previous denoising models mostly use noises to enhance robustness \cite{vincent2008,Yichuan2012,im2017} of the model for unseen data. However, the inference from any kind of observed variable has not been explored by previous models. Lastly, previous denoising models rely on hidden representation. Our EAR model dose not rely on explicit hidden representation and thus has more flexibility.


\section{Conclusion}
In this work, we adapt typical generative models to perform heterogeneous inference and propose EAR, EARA models. Our EAR model achieves the best performance compared to existing models, which avoids the NP-hard constraint of traditional BN and can be learned from end to end. In the last, we present a case study by applying EAR and other GM models to Markov border inference with comprehensive analysis.
\section*{Acknowledgement}
This work is supported by National Key Research and Development Program of China (Grant No. 2016YFB1000304) and National Natural Science Foundation of China (Grant No. 61502019).
\bibliography{ref}
\bibliographystyle{plain}

\end{document}